\documentclass{article}
\usepackage[cmex10]{amsmath}
\interdisplaylinepenalty=2500
\usepackage{algorithmicx}


\usepackage{fleqn}    
\usepackage{booktabs} 
\usepackage{multirow} 

\usepackage{subfig}
\usepackage{subfloat}

\usepackage{color}
\usepackage{ifthen}
\usepackage{eucal}

\usepackage{graphicx}


\usepackage{amssymb}
\usepackage{amsbsy}

\usepackage{algorithm}
\usepackage{algorithmicx}
\usepackage{algpseudocode} 


\usepackage{amsthm}
\newtheorem{theorem}{Theorem}

\usepackage{url}



\usepackage{bm}

\usepackage{pdfpages}

\usepackage{titletoc}


       \newcommand{\vah}{\hat{\bm{a}}}        \newcommand{\ah}{\hat{a}}

\newcommand{\vw}{\bm{w}}               
\newcommand{\vx}{\bm{x}}


\newcommand{\vphi}    {\bm{\phi}}             
\newcommand{\vvarphi}    {\bm{\varphi}}

\newcommand{\vtheta}  {\bm{\theta}}

\newcommand{\vpsi}    {\bm{\psi}}


    \newcommand{\Ac}{\mathcal{A}}

    \newcommand{\Xc}{\mathcal{X}}



\newcommand{\R}{\mathbb{R}}



\DeclareMathOperator*{\argsortk}{arg\,sort^k}
\DeclareMathOperator*{\sortk}{sort^k}






\newcommand{\fig}[1]{Figure~\protect\ref{#1}}

\newcommand{\Sec}[1]{Section~\protect\ref{#1}}

\newcommand{\eq}[1]{(\protect\ref{#1})}


 
\newcommand{\indicator}{\mathbb{I}}
\newcommand{\I}{\mathbb{I}}
\renewcommand{\P}{\mathbb{P}}
\newcommand{\E}{\mathbb{E}}

\textwidth 5.5in 
\textheight 8.3in 
\topmargin -1.0cm 
\oddsidemargin .5in 
\evensidemargin .5in

\begin{document}
\title{Context-Based Prediction of App Usage}
\author{Joseph Keshet\footnote{J. Keshet is with the Department
of Computer Science, Bar-Ilan University, Ramat-Gan, Israel, 52900.}, Adam Kariv\footnote{A. Kariv, A.Dagan, D. Volk and J. Simhon are with EverythingMe and Doat Media Ltd., Tel-Aviv, Israel, 64921.}, Arnon Dagan,$^\dagger$ Dvir Volk,$^\dagger$ and Joey Simhon$^\dagger$}
\maketitle
\begin{abstract}
There are around a hundred installed apps on an average smartphone. The high number of apps and the limited number of app icons that can be displayed on the device's screen requires a new paradigm to address their visibility to the user. In this paper we propose a new online algorithm for dynamically predicting a set of apps that the user is likely to use. The algorithm runs on the user's device and constantly learns the user's habits at a given time, location, and device state. It is designed to actively help the user to navigate to the desired app as well as to provide a personalized feeling, and hence is aimed at maximizing the AUC. We show both theoretically and empirically that the algorithm maximizes the AUC, and yields good results on a set of 1,000 devices.
\end{abstract}

\section{Introduction}
\label{sec:introduction}

%
%
%
%

Smartphones are one of the most widely used devices nowadays. It is estimated that today about half the adult population owns a smartphone. The average American actively uses a smartphone more than two hours every day, and nearly 80\% of smartphone-owners check messages, news or other services within 15 minutes of getting up. 

One of the things that made smartphones ubiquitous is their ability to execute countless apps. Those apps take the advantage of the device's high computation power, constant internet connection, and features like location services, and use them to handle various tasks. Google Play store offers 1.5 million apps for Android users. On the average, there are 97 installed apps on typical smartphone according to the logs of \emph{EverythingMe}, and similarly, there are 96 installed apps according to \emph{Yahoo Aviate}'s logs \cite{baeza2015predicting}. The high number of installed apps and the limited number of app icons that can be displayed on the device's screen, requires a new paradigm to address their visibility to the user. 

In this paper we propose a new algorithm for dynamically predicting a set of apps that the user is likely to use. A set of app icons (usually four) are presented on a special dock, called \emph{Prediction Bar}, and dynamically change according to the user's habits at a given time, location, and device state (headphone connected, bluetooth active, and so on). For example, a user often uses the app \emph{Evernote} to take notes during the morning at her office, but never at home or during the weekend. Every Saturday morning she goes to the farmers market and then uses \emph{BestParking} to help her find a nearby parking place. Every now and then she uses \emph{Facebook}. The time, location and the device state are all considered as the \emph{context} of the user's device. The goal of the algorithm is, given the context information, to actively help the user navigate to the desired app as well as to provide a personalized feeling. 

The algorithm proposed here is based on learning the user's preferences in an online fashion. The algorithm keeps a weight vector (a set of parameters) for each of the installed apps. The user's contextual information is represented as a vector of real numbers called \emph{feature primitive} that is mapped to a high dimensional abstract space, to have a more meaningful representation for the learning process. The projection of the mapped feature primitives onto any of the app's weight vector is the score associated with that app. At a given context the algorithm selects the set of apps that attain the highest score, and presents them to the user as icons in the Prediction Bar. Then the user clicks on one of the apps, either in the Prediction Bar, or elsewhere. If the clicked app is not presented in the Prediction Bar, or is in the Prediction Bar, but with a low score, then the algorithm updates the weight vector associated with the clicked app. The algorithm also updates all the weight vectors of the apps that where mistakenly predicted with a higher score than the clicked app.

One way to assess the performance of the algorithm is by checking the precision of the prediction, that is, the number of times the user clicks on the apps displayed in the Prediction Bar relative to the number of times she clicks on apps displayed anywhere (including the Prediction Bar). While this might sound a very reasonable measure of performance, it seems that this evaluation metric tends to predict the most frequent apps and ignores rarely used apps. It prefers, for example, the prediction of frequently used apps, like \emph{Facebook}, at \emph{any} time and location, over the prediction of apps like \emph{GateGuru}, which the user uses only at airports, or \emph{BestParking}, which the user uses every Saturday morning when she searches for a parking spot. An empirical study conducted by \emph{EverythingMe} showed that the naive paradigm that constantly presents the most frequent apps in the Prediction Bar, attains the highest precision (see \Sec{sec:auto_prediction}). This, of course, does not serve our goals of helping the user navigate to the desired app or of providing a personalized feeling. 

As we shall see from the empirical analysis, we prefer to assess the quality of the prediction using a different metric, namely, the area under the receiver operating characteristic curve (AUC). This measure of performance tends to prefer predictions with a high true positive rate and a low false positive rate, which means that a mis-prediction of any app has the same cost -- no matter how frequent the app is.

We propose an efficient online algorithm that is executed on the device. It is based on the Passive-Aggressive online algorithmic framework \cite{crammer2006online}, adapted to maximize the AUC at each round. While there exists algorithms to maximize the AUC, such as \cite{joachims2005support,rosenfeld2014learning}, they are different from the algorithm proposed here in several aspects. Firstly, the algorithms \cite{joachims2005support,rosenfeld2014learning} are \emph{batch} algorithms. In the batch setting the input to the algorithm is a training set of labeled examples, and the output of the algorithm is a hypothesis that should perform well on an unseen data that drawn 
from the same distribution of the training set. In the \emph{online} setting, which we are interested in this paper, the algorithm constantly adapts the hypothesis, and there is partition of the data into a training set or a test set. The online learning algorithms works in rounds, where at each round the algorithm get as an input a feature vector that represents the device context and has to predict the next apps the user will use. Based on the feedback from the user, the algorithm updates its hypothesis. The online algorithm does not need to be evaluated on a test set of unseen data, but on the next behavior of the user. Hence online algorithm inherently supports a drifting hypothesis, that is, it support a drift or a change of the user's preferences over time. Secondly, the algorithms \cite{joachims2005support,rosenfeld2014learning} are based on structural support vector machine (SSVM) \cite{TsochantaridisHoJoAl05}. They assume that the training set is given as pairs of a feature vector and a binary label. At every round, they need a loss-augment inference with the AUC loss, which is computationally heavier than a simple inference. Our algorithm, on the other hand, assumes that each example is composed of pairs of two feature vectors: one that represents a context in which the app is used and the other represents a context where the app is not used. This might be more efficient than the used of loss-augmented inference, and leads to a strongly consistent preditor when converted to a batch algorithm. Since we are interested in the online setting in the paper, we defer the theoretical comparison to a different paper.

A important issue in practical implementation of the algorithm is how to initiate the Prediction Bar for new users, when we do not have any of their app preferences. We will discussion on this topic in the \Sec{sec:learning}. The interested reader can find more ideas in \cite{baeza2015predicting}. 

The contribution of this paper is the following: (i) a new machine learning online algorithm for ranking the set of installed apps so as to maximized the AUC; (ii) theoretical analysis of the algorithm; (iii) a large scale empirical study on the performance of the algorithm on a real users' data. 

The paper is organized as follows. In \Sec{sec:related_work} we present previous work on predicting app usage. In \Sec{sec:problem_setting} we formally state the notation and the problem definition. Then in \Sec{sec:auto_prediction} we describe the motivation of using the AUC as an evaluation metric. We continue with a detailed derivation of the algorithm and present its theoretical analysis in \Sec{sec:learning}. The features used in the prediction are discussed in \Sec{sec:features}. Experimental results are presented in \Sec{sec:experiments}. We conclude the paper with a discussion in \Sec{sec:discussion}.

\section{Related Work}\label{sec:related_work}

The problem of predicting the app the user is about to use has been recently addressed by many research groups. The work in \cite{Huang:2012:PMA:2370216.2370442} was one of the first. The authors proposed an app predictor based Bayesian Networks and contextual information such as time, location, and the user profile. They evaluated their results using average prediction rate on the IDIAP/Nokia MDC dataset \cite{laurila2012mobile}, which contains a small group of 38 users. Similarly, in \cite{zou2013prophet} the author proposed a light-weighted Bayesian methods to predict the next app based on the app usage history. They also evaluated their result on the MDC dataset. Yahoo Aviate team used Bayesian Networks as a learning algorithm and evaluated it on a larger set of 480 active users, and compare their system to other standard learning algorithm \cite{baeza2015predicting}. They are one of the few groups who study the cold-start problem, when no data is available on the user. In \cite{Huai:2014:TPC:2672614.2629504} the temporal user's behavior was also taken into account by using an HMM-based sequence prediction. 

Most works are based on contextual features related to the time, location, phone state and environment. Some authors \cite{yan2011appjoy,liao2012mining}, however, proposed to identify the usage patterns and user rating, without taking into account usage context. Interestingly, in \cite{liao2012mining} the authors propose to detect the periodicity patterns of usage by the Fourier transform, and then scoring them by  Chebyshev's inequality. 

Several authors based their prediction on similar users or a general group of users. Both \cite{yan2011appjoy,natarajan2013app} are based on the adaptation of a collaborative filtering algorithm as an app prediction algorithm. The work presented in \cite{xu2013preference} leverages the user-specific models by patterns of community app behaviors, guided by user similarities. 

Related works also include predictions of users behavior with mobile devices. See, for example,  \cite{verkasalo2009contextual,fitchett2012accessrank,etter2013go,do2014and,lin2014mining}, and the many refereces therein. Another set of works concerns a smart caching mechanism for fast app pre-loading  \cite{yan2012fast,parate2013practical}. 

All the works above have been focused on the maximizing the average precision or the prediction rate. In that sense our work is unique, as it proposes a new theoretically founded algorithm for the task of app prediction. Most of the works has been focused on generative models (Bayesian Networks and HMM) and our work is based on discriminative models (large margin and kernel methods). We would like to note that we found it difficult to directly compare our algorithm to previous work. The main reason is that there is no single benchmark for this tasks, and the few dataset which exists are either restricted to non-profit organizations (IDIAP/Nokia MDC dataset) or not available online (AppJoy dataset).

\section{Problem Setting}
\label{sec:problem_setting}

In this section we set the notation and formally define the problem of online prediction of app usage. Our goal is to predict the most probable apps the user will use given her location, activity, time, device status, and so on. The input to the system, therefore, is a \emph{feature primitive}, $\vx\in\Xc$, that represents the user's contextual information as a vector of $n$ real numbers, where $\Xc\subset\R^n$ is the domain of possible contextual information. The concrete representation and description of the features are discussed in \Sec{sec:features}.

We denote the set of user's installed apps by $\Ac$, and their number by $K=|\Ac|$. The Prediction Bar presents a set of $k$ apps (currently $k$=4), and it is assumed that at a given context the user may click on one of the $k$ apps ($k\le K$) with a high probability. Our goal is to find a function that outputs the set of $k$ apps the user is about to click on, given the input context. The predicted apps should reflect their relevance according to the user's preference at the given context.

We assume that there is a score function, $f: \Xc \times \Ac \to \R$, that assigns a score to every app $a\in\Ac$ given the user's context $\vx\in\Xc$. High score of $f(\vx, a)$ means that app $a$ is relevant at context $\vx$. We define the set of all scores as $F(\vx, \Ac) = \{ f(\vx,a) \, |\, a\in\Ac\}$. Define the $k$-\emph{sort} function as a function that returns a vector of the top $k$ ordered scores, given the set of all scores, namely, $\sortk : F(\vx,\Ac) \to \R^k$. The argument of the $k$-\emph{sort} function is the set of apps corresponding to the top $k$ ordered scores. The prediction of the set of $k$ best apps is given as
\begin{equation}
\vah = \argsortk F(\vx,\Ac),
\end{equation}
where $\vah\in\Ac^k$ is a vector of $k$ apps, ordered such that the most relevant app is first, the second most relevant is second, and so on.

The performance of the prediction is measured by a cost function $\gamma(a, \vah)$ between the clicked app $a\in\Ac$ and a set of $k$ predicted apps $\vah$, which checks the existence of the clicked app $a$ within the set $\vah$ and returns a positive real number, namely, $\gamma: \Ac \times \Ac^k \to \R_+$. For example, the 0-1 cost function is defined to be 0 if $a\in\vah$ and  1 otherwise, that is,
\begin{equation}\label{eq:0-1cost}
\gamma(a, \vah) = \indicator[a \notin \vah],
\end{equation}
where $\I[\pi]$ is an indicator function and it equals to 1 if the predicate $\pi$ holds and 0 otherwise. If $k=1$ this cost function reduces to the standard 0-1 binary loss function. In \Sec{sec:auto_prediction} we show that using the 0-1 cost function as a measure of performance favors the set of $k$ most frequently used apps over other types of predictions, and we then present the reason to introduce other cost functions.

We propose an online learning algorithm for predicting a set of $k$ apps. The online algorithm maintains a set of parameters. We denote the parameters the $t$-th round by $\vtheta_t$, and indicating that by adding a subscript to the prediction function $f_{\vtheta_t}$. Each round corresponds to a click on an app. After each such click the algorithm updates it parameters and generates a new predictor $f_{\vtheta_{t+1}}$. Our goal is to estimate the parameters so as to maximize the cumulative AUC along its run.
\section{Automatic App Usage Prediction}
\label{sec:auto_prediction}

\begin{figure*}[th]
\centering
\includegraphics[height=8cm]{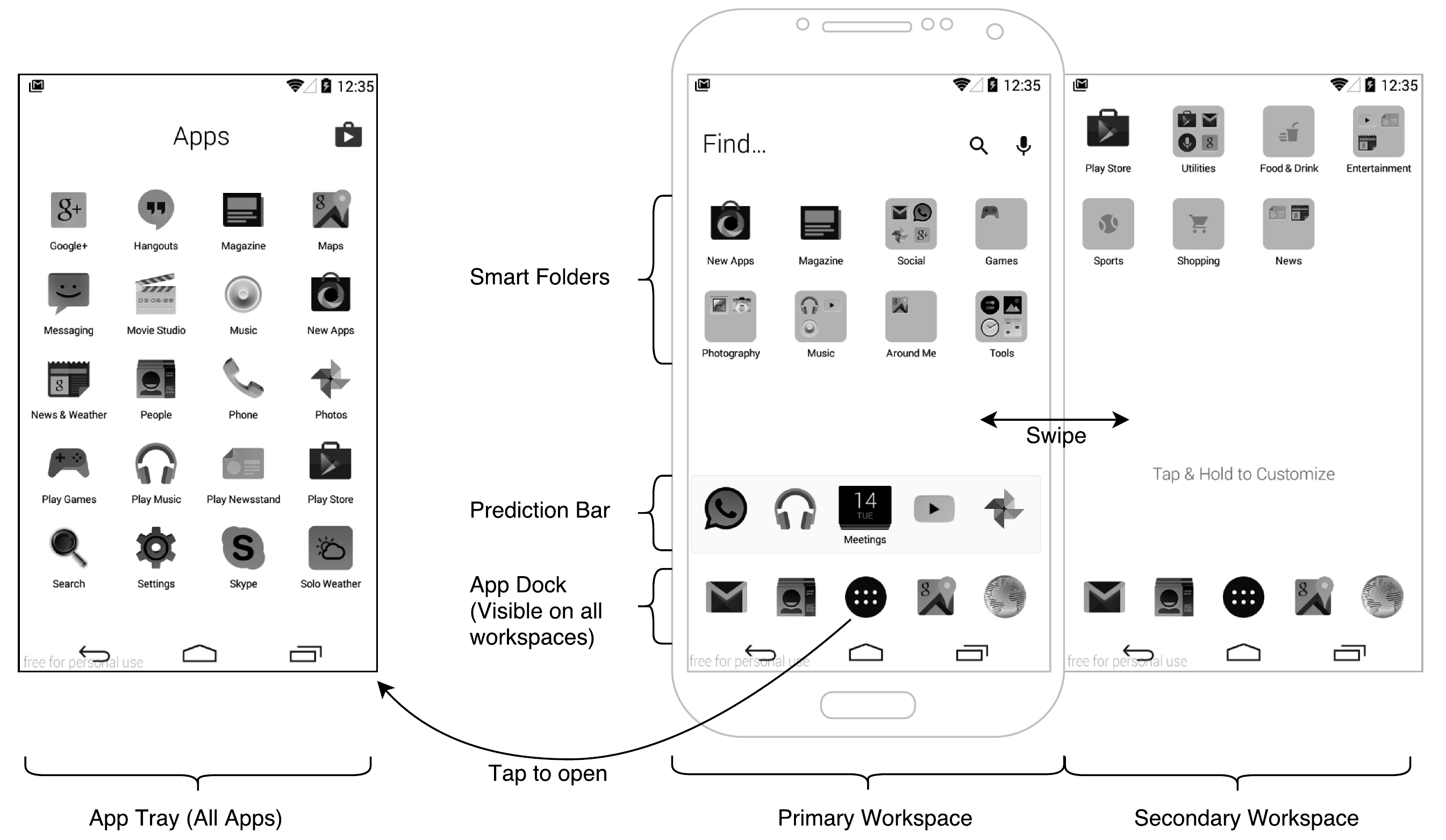}
\caption{Screenshot of the mobile device.}
\label{fig:screenshot}
\end{figure*}

We briefly review the device screen organization. A schematic screenshot of the device is depicted in \fig{fig:screenshot}. In this particular device there are two Workspaces. A Workspace is a virtual screen where the user can place app icons and app folders. The user can change Workspaces by swiping the screen left or right. The primary Workspace is also called the Home Screen. At the bottom of all Workspaces there is a set of 5 icons which is called the App Dock. The App Dock is composed of 4 user defined app icons, while the middle icon opens the App Tray - a folder which contains all of the installed apps. On the first Workspace, above the App Dock, there is another set of 5 icons which are called the Prediction Bar. Out of those 5 icons, 4 represent apps that are automatically predicted according to the context information.

Apps that are located on the Home Screen or on the App Dock are considered as highly available apps and they should not be considered to be predicted in the Prediction Bar. By presenting to the user apps which are already located in the same screen, we create un-necessary duplication and offer no value at all. This characteristic has an interesting implication. These highly available app clicks we avoid also tend to be the ones that are most used by the user. About 35\% of app clicks are done on application shortcuts which are placed on the Home Screen or the App Dock; these clicks are distributed among only 4.6 apps on average. The rest of the clicks, i.e., the ones which occur in other locations, are distributed among 17.6 apps on average. This means that our prediction algorithm needs to focus on the 2nd-tier apps, and not on the mostly used apps. Hence, from now on we will restrict ourselves to the prediction of those apps which are not on the Home Screen.

In order to generate the automatic prediction, the device keeps track of all app clicks. For each such event the device records the time of the event, the device location (both exact coordinates as well as an indication of a recurring location), and hardware related signals (e.g., headphones connected indication, current Wi-Fi network, Bluetooth devices, etc.). Along with these fields, the position of the app on the device's screen is also recorded. Specifically we record both the relative location of the app's icon on the display (e.g., first row and third column of the secondary workspace), and if it was clicked on a special location (e.g. App Tray, App Dock, Prediction Bar, a search result, a shortcut inside a folder, etc.) --- all those form the user's context feature vector $\vx$.

One of the most trivial ways to build the Prediction Bar is to constantly predict the $k$ most frequent apps. We name this method \emph{the $k$ Most Frequently Used}, or $k$MFU. Formally, we define the score function for an app $a$ as follows:
\begin{equation}
f^\mathrm{kMFU}(\vx,a) = \left\{ \begin{array}{ll}
1 & \textrm{if ~ $a \in \Ac_{k{\rm MFU}}$}\\
0 & \textrm{otherwise}
  \end{array} \right. ,
\end{equation}
where $\Ac_{k{\rm MFU}}$ is the set of $k$ most used installed apps.

A bit more dynamic method combines the most frequent apps and the most recent apps and is called the \emph{Frecency} algorithm. This algorithm was originally used for cache management \cite{lee1999existence}, and adapted to an algorithm for predicting users' behavior \cite{fitchett2012accessrank}. Denote by $t^a_i$ the time of the $i$-th click on app $a$, where there were overall $n^a$ clicks on this app in the last $T$ days. Then the score function of an app $a$ is defined as follows:
\begin{equation}\label{eq:frecency}
f^{\mathrm{Frecency}}(\vx,a) = \sum_{i=1}^{n^a} {p}^{\frac{t-t^a_i}{T}},
\end{equation}
where $p$ and $T$ are fixed parameters (in our setting $p=0.1$ and $T=60$ days performed best).

Before we considered designining a new online learning algorithm, we tested the $k$MFU and the Frecency algorithms on our data as follows (full details in \Sec{sec:experiments}). We  chose uniformly at random a set of 1000 devices that contains a total of 5,181,312 app clicks. For each device we executed both $k$MFU and Frecency and tried to predict a set of $k=4$ apps. We checked the average 0-1 cost as in \eq{eq:0-1cost} of each algorithm on each device and averaged the results on all devices. This is the average precision of the algorithm, averaged on the rounds and the devices.

We found out that both $k$MFU and Frecency algorithms had very high precision (see \fig{fig:compare-acc}). This, however, does not serve our goal: the $k$MFU algorithm simply puts the same 4 apps in the Prediction Bar, and does not predict the next app or gives a personalized feeling. The Frecency algorithm tends to predict frequent used apps and ``forgets'', for example, bi-weekly or weekly app click patterns.

Measuring the performance of the algorithm using \emph{precision} is biased toward the frequency of the app usage. In our case a better evaluation metric would be the area under the receiver operating characteristic curve (AUC) \cite{fawcett2006introduction}. This measure of performance tends to prefer predictions with a high true positive rate and a low false positive rate, which means that a mis-prediction of any app has the same cost no matter how frequent the app is. In the next section we derive an online algorithm that predict the next app and aims at maximizing the AUC metric.
\section{Learning Apparatus}
\label{sec:learning}

We saw in the previous section that the the usage of 0-1 cost function, $\gamma(a, \vah) = \indicator[a \notin \vah]$, to assess the performance favors the set of $k$ most used apps over other predictions. This lead to a static prediction, which does not reflect the user's instantaneous context-based app usage. We turn to describe a different measure of evaluation.

The receiver operating characteristic curve (ROC) is a graphical plot of the true positive rate  ($a\in\vah$) as a function of the false positive rate ($a\notin\vah$). The points on the curve are obtained by sweeping the decision threshold from the most positive confidence value to the most negative one. Hence, the choice of the threshold represents a trade-off between different operational settings, corresponding to cost functions weighting false positive and false negative errors differently. Assuming a flat prior over all cost functions, it is appropriate to select the system maximizing the averaged performance over all settings, which corresponds to the model maximizing the area under the ROC curve (AUC). In the following we describe a large margin approach which aims at predicting a set of apps which achieves a high AUC.

Let us denote by $\Xc^{a,+}$  the set of contexts (location, times, WiFi networks, etc.) where  app $a$ is clicked. Similarly denote by $\Xc^{a,-}$ the set of contexts where  app $a$ is never clicked. The AUC can be defined as \cite{bamber1975area,hanley1982meaning}
\begin{equation}\label{eq:auc_def}
\text{AUC}=\P[f_{\vtheta}(\vx^{a,+},a) >  f_{\vtheta}(\vx^{a,-}, a)],
\end{equation}
where $\vx^{a,+}\in \Xc^{a,+}$ and $\vx^{a,-}\in \Xc^{a,-}$. That is, the AUC is the probability that the confidence for app $a$ in a context that it is clicked is higher than the confidence where it is not clicked. The probability is over a triplet $(\vx^{a,+}, \vx^{a,-}, a)$ drawn from a fixed unknown distribution $\rho$. Our goal is find the parameters $\vtheta$ that maximizes the AUC, namely
\begin{equation}
\vtheta^* = \arg \max_{\vtheta} ~ \P[f_{\vtheta}(\vx^{a,+},a) >  f_{\vtheta}(\vx^{a,-}, a)],
\end{equation}
or equivalently,
\begin{equation}
\vtheta^* = \arg \max_{\vtheta} ~ \E\left[\I\left[f_{\vtheta}(\vx^{a,+},a) >  f_{\vtheta}(\vx^{a,-}, a)\right]\right],
\end{equation}
where the expectation is with respect to the distribution $\rho$. Since the distribution $\rho$ is unknown, this optimization problem cannot be solved directly. For methodological reasons we will first discuss the batch setting, where a training set of $m$ examples are sampled from the distribution $\rho$. We replace the expectation with the sample average and add a regularization factor to avoid overfitting. The objective is now
\begin{equation}\label{eq:objective}
\min_{\vtheta} ~ \frac{1}{m}\sum_{i=1}^{m}\I\left[f_{\vtheta}(\vx^{a_i,+}_i,a_i) <  f_{\vtheta}(\vx^{a_i,-}_i, a_i)\right] + \lambda \Omega(\vtheta),
\end{equation}
where $\lambda$ is a trade-off parameter between the loss term and the regularization. Conventionally, we replaced the $\max$ operation with a $\min$ operation while changing the direction of the inequality sign. Still, we cannot optimize this objective since the summands are indicator functions, a combinatorial quantity that is hard to be minimized directly. Different learning algorithms use various surrogate loss function to approximate this function. Here we focused on large margin based algorithms, which replaces it with a convex upper bound surrogate loss called \emph{hinge loss} we describe in the next subsection.

\subsection{A Large Margin Algorithm for App Ranking}

Building on techniques used for learning multiclass \cite{crammer2002algorithmic} and structured prediction classifiers \cite{TaskarGuKo03,TsochantaridisHoJoAl05}, our set of prediction functions distills to a classifier in this vector-space which is aimed at separating the relevant apps from irrelevant ones. We focus on the following set of linear prediction functions: 
\begin{equation}\label{eq:linear_score_fucntion}
 f_{\vtheta}(\vx, a) = \vtheta\cdot\vvarphi(\vx,a),
\end{equation}
where $\vtheta\in \R^d$ are the model parameters , and $\vvarphi: \Xc \times \Ac \to \R^d$ is a set of functions called \emph{feature functions} or \emph{feature maps}. Each feature function gets as input a feature primitive $\vx$ and an app $a$ and returns a scalar which, intuitively, should be correlated with whether the app $a$ is used in context of $\vx$. For example, one such function might be how many times the app $a$ was clicked in a radius of 50 meters around the location \emph{xyz}. Refer to \Sec{sec:features} for mode details.
 
Methods like support vector machines (SVMs) replace the summands of \eqref{eq:objective} with their corresponding convex upper bound called \emph{hinge loss}. Denote $[\pi]_+ = \max\{\pi,0\}$, then in our case:
\begin{align}
\I\Big[\vtheta&\cdot\vvarphi(\vx^{a_i,+}_i,a_i) < \vtheta\cdot\vvarphi(\vx^{a_i,-}_i, a_i)\Big] \\
 &\le \Big[ 1 - \vtheta\cdot\vvarphi(\vx^{a_i,+}_i,a_i) +  \vtheta\cdot\vvarphi(\vx^{a_i,-}_i, a_i)  \Big]_+ \\
 &\le \Big[ 1 - \vtheta\cdot\vvarphi(\vx^{a_i,+}_i,a_i) + \max_a ~ \vtheta\cdot\vvarphi(\vx^{a_i,-}_i, a)  \Big]_+.
 \end{align}
We define the hinge surrogate loss as follows:
\begin{equation}\label{eq:auc_loss}
\ell(\vx^{a_i,+}_i,\vx^{a_i,-}_i,a_i, \vtheta) = \Big[ 1 - \vtheta\cdot\vvarphi(\vx^{a_i,+}_i,a_i) 
+ \max_a ~ \vtheta\cdot\vvarphi(\vx^{a_i,-}_i, a) \Big]_+.
 \end{equation}
Using this surrogate loss instead of the cost in the objective \eqref{eq:objective}, and using the $\ell_2$ regularization, we get the structural SVM algorithm for maximizing the AUC in our setting
\begin{equation}
\min_{\vtheta}  ~ 
 \frac{1}{m} \sum_{i=1}^{m}\ell(\vx^{a_i,+}_i,\vx^{a_i,-}_i,a_i, \vtheta) ~+~ \frac{\lambda}{2}\|\vtheta\|^2.
\end{equation}
This is a convex function in its parameters and its solution can be found using the cutting plane method \cite{TsochantaridisHoJoAl05} or by stochastic sub-gradient descent \cite{shalev2011pegasos}. In out setting we are interested in optimizing the set of parameters for each user separately on the device. Moreover we would like to handle scenarios where the users preferences changes over time. Hence we turn to online learning algorithm.  

\subsection{An On-line Algorithm}

We now describe an online algorithm for learning the parameters, while maximizing the AUC instantaneously. It is a variant of the Passive-Aggressive algorithm  \cite{crammer2006online} for maximizing the AUC which we call \emph{AUC-PA}.

The online algorithm works in rounds, and updates its parameters every round. Set the initial parameters to $\vtheta_0=\boldsymbol{0}$. While the model extends prediction for every context, a learning round is defined by the event of app click. At the $t$-th round, when the device is at context $\vx_t$, the algorithm predicts a set of $k$ apps $\vah_t$ using the set of parameters $\vtheta_{t-1}$
\begin{equation}
\vah_t = \argsortk ~ \{ \vtheta_{t-1}\cdot\vvarphi(\vx_t,a) ~|~ a\in\Ac \}.
\end{equation}
The set of predicted apps $\vah^k_t$ are presented to the user. The user clicks on an app $a_t$ either from the set of $k$ presented apps or from the whole set of the $n$ installed apps $\Ac$. 

For each app $a$ we keep a set $\Xc^{a,-}$ of randomly chosen context, where the app $a$ is never clicked by the user. Once we know the user clicked the app $a_t$, we extract uniformly at random $\vx^{a_t,-} \in \Xc^{a_t,-}$. We then predict the most relevant app given the new feature primitive
\begin{equation}
\ah^-_t = \arg\max_{a\in\Ac} ~ \vtheta_{t-1}\cdot\vvarphi(\vx^{a_t,-},a).
\end{equation}
The model then suffers a loss $\ell(\vx^{a_t,+}_t,\vx^{a_t,-}_t,a_t, \vtheta_{t-1})$, defined in \eq{eq:auc_loss} or equivalently
\begin{equation}
\ell(\vx^{a_i,+}_i,\vx^{a_i,-}_i,a_i, \vtheta_{t-1}) 
=  1 - \vtheta_{t-1}\cdot\vvarphi(\vx^{a_i,+}_i,a_i) 
+  \vtheta_{t-1}\cdot\vvarphi(\vx^{a_i,-}_i, \ah^-_t),
\end{equation}
and updates the parameters as follows
\begin{equation}
\vtheta_t = \arg\min_{\theta} ~ \ell(\vx^{a_t,+}_t,\vx^{a_t,-}_t,a_t, \vtheta) ~+~ 
\frac{\lambda}{2} \| \vtheta  - \vtheta_{t-1} \|^2.
\end{equation}
The solution of this optimization problem is \cite{crammer2006online} 
\begin{equation}\label{eq:pa_update}
\vtheta_t = \vtheta_{t-1} + \tau_t \Big[ \vvarphi(\vx^{a_t,+},a_t) - \vvarphi(\vx^{a_t,-},\ah^-_t) \Big],
\end{equation}
where
\begin{equation}\label{eq:tau}
\tau_t = \min \left\{ \frac{1}{\lambda}, \frac{\ell(\vx^{a_t,+}_t,\vx^{a_t,-}_t,a_t, \vtheta_{t-1})}{\| \vvarphi(\vx^{a_t,+},a_t) - \vvarphi(\vx^{a_t,-},\ah^-_t) \|^2} \right\}.
\end{equation}
The overall algorithm is described in Algorithm~\ref{alg:auc_pa}.

\begin{algorithm}
\caption{The AUC-PA algorithm}\label{alg:auc_pa}
\begin{algorithmic}[1]
\State {\bf input:} parameter $\lambda$
\State {\bf init:} $\vtheta_0 = \boldsymbol{0}$
\For{$t=1,2,\dots$}
	\State new user's context $\vx_t$
  \State predict $\vah_t = \argsortk ~ \{ \vtheta_{t-1}\cdot\vvarphi(\vx_t,a) ~|~ a\in\Ac \}$
  \State the user clicked on $a_t$
	\State infer $\ah^-_t = \arg\max_{a\in\Ac} ~ \vtheta_{t-1}\cdot\vvarphi(\vx^{a_t,-},a)$
	\State suffer loss $\ell(\vx^{a_i,+}_i,\vx^{a_i,-}_i,a_i, \vtheta_{t-1})$
	\State set $\tau_t = \min \left\{ \frac{1}{\lambda}, \frac{\ell(\vx^{a_t,+}_t,\vx^{a_t,-}_t,a_t, \vtheta_{t-1})}{\| \vvarphi(\vx^{a_t,+},a_t) - \vvarphi(\vx^{a_t,-},\ah^-_t) \|^2} \right\}$
	\State update: $\vtheta_t = \vtheta_{t-1} + \tau_t \Big[ \vvarphi(\vx^{a_t,+},a_t) - \vvarphi(\vx^{a_t,-},\ah^-_t) \Big]$
\EndFor
\end{algorithmic}
\end{algorithm}

\subsection{Analysis}

We show that our online algorithm attains a high cumulative AUC after $T$ rounds, defined as follows 
\begin{equation}\label{eq:cumulative_auc}
\widehat{\text{AUC}}=\frac{1}{T}\sum_{t=1}^{T} \I\Big[\vtheta_t\cdot\vvarphi(\vx^{a_t,+}_t,a_t) > \vtheta_t\cdot\vvarphi(\vx^{a_t,-}_t, a_t) \Big],
\end{equation}
where $\vtheta_1,\ldots,\vtheta_T$ are the weight vectors obtained by the algorithm. The examination of the cumulative AUC is of great interest as it provides an estimator for the generalization performance. Note that at each round the algorithm can be considered as receives new example $(\vx^{a_t,+}_t, \vx^{a_t,-}_t, a_t)$ and predicts an app that is best suitable to $\vx^{a_t,-}_t$ using the previous weight vector $\vtheta_{t-1}$. Only after the prediction $\ah^-_t$ is made the algorithm suffers loss. The cumulative AUC is a weighted sum of the performance of the algorithm on the next unseen training example and hence it is a good estimation to the performance of the algorithm on unseen data during training.

The following theorem states a competitive bound. It compares the cumulative AUC of the weight vectors series, $\{ \vtheta_1,\ldots,\vtheta_{T} \}$, resulted from the online algorithm to the best fixed weight vector, $\vtheta^\star$, chosen in hindsight, and essentially proves that, for any sequence of examples, our algorithms cannot do much worse than the best fixed weight vector. Formally, it shows that the cumulative area \emph{above} the curve, $1-\widehat{\text{AUC}}$, is smaller than the weighted average loss $\ell(\vx^{a_t,+}_t,\vx^{a_t,-}_t,a_t, \vtheta_{t-1})$ of the best fixed weight vector $\vtheta^\star$ and its weighted complexity. That is, the cumulative AUC of the iterative training algorithm is going to be high, given that the loss of the best solution is small, the complexity of the best solution is small and that there are reasonable number of rounds, $T$.

\begin{theorem} \label{thm:online}
  Let $\{(\vx^{a_t,+}_t,\vx^{a_t,-}_t,a_t)\}^{T}_{t=1}$ be a
  set of training examples and assume that we have
  $\| \vvarphi(\vx,a) \| \le 1/\sqrt{2}$ for all $\vx$ and $a$. Let $\vtheta^\star$ be the best weight vector selected under some optimization criterion by observing all instances in hindsight. Then,
\begin{equation} 
1 - \widehat{\text{AUC}} \leq \frac{\lambda}{T}\|\vtheta^\star\|^2 + \frac{2}{T}
\sum_{t=1}^T \ell(\vx^{a_t,+}_t,\vx^{a_t,-}_t,a_t, \vtheta^\star) .
\end{equation}
where $\lambda \le 1$ and $\widehat{\text{AUC}}$ is the cumulative AUC defined in \eqref{eq:cumulative_auc}.
\end{theorem}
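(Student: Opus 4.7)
The plan is a standard Passive--Aggressive potential argument using $\Phi_t := \|\vtheta_t - \vtheta^\star\|^2$ and telescoping. Since $\vtheta_0 = \boldsymbol{0}$ and $\Phi_T \ge 0$, summing $\Phi_{t-1}-\Phi_t$ over $t$ gives at most $\|\vtheta^\star\|^2$. The entire proof reduces to lower-bounding each per-round progress $\Phi_{t-1}-\Phi_t$ by a quantity that captures both the algorithm's round-$t$ loss $\ell_t := \ell(\vx^{a_t,+}_t,\vx^{a_t,-}_t,a_t,\vtheta_{t-1})$ and the competitor's round-$t$ loss $\ell_t^\star$.

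I would first substitute the update $\vtheta_t = \vtheta_{t-1} + \tau_t \vz_t$, where $\vz_t := \vvarphi(\vx^{a_t,+}_t,a_t) - \vvarphi(\vx^{a_t,-}_t,\ah^-_t)$, and expand to obtain $\Phi_{t-1}-\Phi_t = 2\tau_t\,\vz_t\cdot(\vtheta^\star-\vtheta_{t-1}) - \tau_t^2\|\vz_t\|^2$. The definition of the hinge loss supplies $\vtheta_{t-1}\cdot\vz_t = 1-\ell_t$ on rounds where $\ell_t>0$ (the other rounds have $\tau_t=0$ and contribute nothing). For $\vtheta^\star$, the $\max_a$ appearing inside $\ell_t^\star$ is at least the value at $\ah^-_t$, so $\vtheta^\star\cdot\vz_t \ge 1-\ell_t^\star$. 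Subtracting yields $\vz_t\cdot(\vtheta^\star-\vtheta_{t-1}) \ge \ell_t-\ell_t^\star$. A brief case analysis on $\tau_t = \min\{1/\lambda,\,\ell_t/\|\vz_t\|^2\}$ shows $\tau_t^2\|\vz_t\|^2 \le \tau_t\ell_t$ in either branch, giving the per-round inequality $\Phi_{t-1}-\Phi_t \ge \tau_t(\ell_t-2\ell_t^\star)$. Telescoping then produces $\sum_t \tau_t\ell_t \le \|\vtheta^\star\|^2 + 2\sum_t \tau_t\ell_t^\star$.

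To close the argument I would convert this $\tau$-weighted inequality into the claimed AUC bound. The key observation is that on any round where the indicator inside $\widehat{\text{AUC}}$ equals one, $\ell_t \ge 1$: enlarging the negative-side score via $\max_a$ and adding the unit margin forces the hinge bracket to exceed one. The feature bound $\|\vvarphi(\vx,a)\| \le 1/\sqrt{2}$ gives $\|\vz_t\|^2 \le 2$, and combined with $\lambda \le 1$ this pins $\tau_t$ away from zero on mistake rounds so that $\tau_t\ell_t$ dominates the indicator. The uniform bound $\tau_t \le 1/\lambda$ on the right controls $\sum_t \tau_t\ell_t^\star$ in terms of $\sum_t\ell_t^\star$. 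Assembling these estimates and dividing by $T$ produces the stated bound.

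The main obstacle is this last conversion: the asymmetry between a lower bound on $\tau_t$ available only on mistake rounds and the uniform upper bound $\tau_t \le 1/\lambda$ is what locks in the coefficients on $\|\vtheta^\star\|^2$ and on $\sum_t\ell_t^\star$, and is where both hypotheses $\|\vvarphi\|\le 1/\sqrt{2}$ and $\lambda\le 1$ are used in a tightly coupled way. Everything upstream --- the potential expansion, the hinge-loss identities linking $\vtheta_{t-1}\cdot\vz_t$ and $\vtheta^\star\cdot\vz_t$ to $\ell_t$ and $\ell_t^\star$, and the two-case verification that $\tau_t^2\|\vz_t\|^2 \le \tau_t\ell_t$ --- is mechanical, but matching constants in the final conversion is the subtle step.
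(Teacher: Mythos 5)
Your overall route is the paper's route: the telescoping of $\Phi_t=\|\vtheta_t-\vtheta^\star\|^2$ together with the expansion of the update is exactly Lemma~1 of Crammer et al.\ that the paper invokes, and your per-round manipulations ($\vtheta_{t-1}\cdot\vz_t=1-\ell_t$ on active rounds; $\vtheta^\star\cdot\vz_t\ge 1-\ell_t^\star$ because $\ah^-_t$ is one candidate inside the $\max_a$; $\tau_t^2\|\vz_t\|^2\le\tau_t\ell_t$; and $\ell_t\ge1$ whenever the AUC indicator fails) are precisely the steps of the paper's proof. Up to and including $\sum_t\tau_t\ell_t\le\|\vtheta^\star\|^2+2\sum_t\tau_t\ell_t^\star$ you have reproduced the argument, just with the cited lemma inlined.

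The gap is the final conversion, which you explicitly defer (``assembling these estimates \dots produces the stated bound'') and which is the only place the theorem's particular constants can come from. Carrying it out: on a mistake round $\ell_t\ge 1$ gives $\tau_t\ell_t\ge\tau_t\ge\min\{1/\lambda,\,1/\|\vz_t\|^2\}$, while on the right-hand side only $\tau_t\le 1/\lambda$ is available, so $\tau_t\ell_t^\star\le\ell_t^\star/\lambda$. With your (correct, triangle-inequality) bound $\|\vz_t\|^2\le 2$ the mistake-round term is only $\ge\tfrac12$, yielding $M\le 2\|\vtheta^\star\|^2+\tfrac{4}{\lambda}\sum_t\ell_t^\star$; with the paper's sharper claim $\|\vz_t\|^2\le 1$ (which requires the two feature vectors to have non-negative inner product, e.g.\ block-orthogonal app-indexed feature maps --- the norm hypothesis alone only gives $2$) one gets $M\le\|\vtheta^\star\|^2+\tfrac{2}{\lambda}\sum_t\ell_t^\star$. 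Neither is the stated $\lambda\|\vtheta^\star\|^2+2\sum_t\ell_t^\star$: the factor $\lambda$ ends up attached to the loss sum rather than to $\|\vtheta^\star\|^2$, and multiplying through by $\lambda\le 1$ only weakens the left-hand side, so it cannot be moved. The paper's own proof makes these same moves and then simply asserts the final display, so you are stranded exactly where the paper hand-waves; but since you named the constant-matching as the crux and then did not resolve it, the proposal as written does not establish the theorem in the form stated. (A smaller issue you share with the paper: the mistakes you count are measured with $\vtheta_{t-1}$, whereas $\widehat{\text{AUC}}$ in \eqref{eq:cumulative_auc} is defined with $\vtheta_t$; that index shift also needs a word.)
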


\begin{proof}
Denote by $\ell_t(\vtheta)$ the instantaneous loss the weight vector $\vtheta$ suffers on the $t$-th round, that is, $\ell_t(\vtheta) = \ell(\vx^{a_t,+}_t,\vx^{a_t,-}_t,a_t, \vtheta_{t-1})$. 
The proof of the theorem relies on Lemma 1 and Theorem 4 in \cite{crammer2006online}. Lemma 1 in \cite{crammer2006online} implies that,
\begin{equation}\label{eq:lemma1}
\sum_{t=1}^T \tau_t \Big( 2\ell_i(\vtheta_{t-1}) 
- \tau_t \| \vvarphi(\vx^{a_t,+},a_t) - \vvarphi(\vx^{a_t,-},\ah^-_t)  \|^2 - 2\ell_t(\vtheta^\star) \Big) \leq \|\vtheta^\star\|^2.
\end{equation}

Now if the algorithm makes a prediction mistake and the predicted confidence of app $\ah^-$ in $\vx^{a_t,-}_t$ is higher than the confidence of the app $a_t$ in $\vx^{a_t,+}_T$ then $\ell_t(\vtheta_{t-1}) \geq 1$. Using the assumption that $\| \vvarphi(\vx,a) \| \le 1/\sqrt{2}$, which in turn means that 
$$
\| \vvarphi(\vx^{a_t,+},a_t) - \vvarphi(\vx^{a_t,-},\ah^-_t) \|^2 \leq 1,
$$
and the definition of $\tau_t$ given in \eqref{eq:tau}, we conclude that if a prediction mistake occurs then it holds that
\begin{align} \nonumber
\tau_t \ell_t(\vtheta_{t-1}) &\ge  \min\left\{\frac{ \ell_t(\vtheta_{t-1}) }{ \| \vvarphi(\vx^{a_t,+},a_t) - \vvarphi(\vx^{a_t,-},\ah^-_t) \|^2 } , \frac{1}{\lambda}\right\} \\  \nonumber
&\ge  \min\left\{ 1 , \frac{1}{\lambda}\right\} = 1.
\end{align}

Summing over all the prediction mistakes made on the entire set of examples and taking into account that $\tau_t \ell_t(\vtheta_{t-1})$ is always non-negative, we have
\begin{equation}\label{eq:ineq1}
\sum_{t=1}^T \tau_t \ell_t(\vtheta_{t-1}) 
\ge 
\sum_{t=1}^T  \I\Big[\vtheta_{t-1}\cdot\vvarphi(\vx^{a_t,+}_t,a_t) \le \vtheta_{t-1}\cdot\vvarphi(\vx^{a_t,-}_t, a_t) \Big].
\end{equation}

Again using the definition of $\tau_t$, we know that $\tau_t \ell_t(\vtheta^\star) \le  \ell_t(\vtheta^\star)/\lambda$ and that $\tau_t \|\Delta\vvarphi_t\|^2 \le \ell_t(\vtheta_{t-1})$. Plugging these two inequalities and \eqref{eq:ineq1} into \eqref{eq:lemma1} we get
\begin{equation}
\sum_{t=1}^T  \I\Big[\vtheta_{t-1}\cdot\vvarphi(\vx^{a_t,+}_t,a_t) \le \vtheta_{t-1}\cdot\vvarphi(\vx^{a_t,-}_t, a_t) \Big] 
\le \lambda \|\vtheta^\star\|^2 + 2\sum_{t=1}^T \ell_t(\vtheta^\star) .
\end{equation}
The theorem follows by replacing the sum over prediction mistakes to a sum over prediction hits and plugging the definition of the cumulative AUC given in \eqref{eq:cumulative_auc}. 
\end{proof}


\section{Feature Primitives and Feature Functions}\label{sec:features}

All our features are based on basic primitives that are measured from the device. The feature primitives denoted $\vx\in\Xc\subset \R^n$. They are the device time and date, the device location, and hardware related signals (e.g., headphones connected indication, current Wi-Fi network, Bluetooth devices, etc.). 

On the top of the feature primitives, we design a set of feature functions which allow us to incorporate into the feature design the presumed app. The idea is that each feature function takes as input a vector of feature primitives $\vx$, which describes the context information, and an app $a$, and returns a scalar which should be related to whether the app $a$ corresponds to the the user's preference at context $\vx$. The feature functions map the context vector $\vx$ along with a proposed app $a$ to a vector of fixed dimension in $\R^d$.

Our feature functions have two sets of representations: \emph{contextual features} and \emph{app-dependent features}. Before turning to describe the actual feature, we define them formally. The set of contextual features is composed of a set of non-linear functions of the feature primitives $\vx$ and denoted as $\vpsi(\vx)$, where $\vpsi:\Xc\to\R^{d_{\psi}}$, where $d_{\psi}$ is the number of contextual features. The set of app-dependent features is a non-linear functions of the feature primitives $\vx$ and the app $a$ and denoted $\vphi(\vx,a)$, where $\vphi:\Xc\times\Ac\to\R^{d_{\phi}}$, where $d_{\phi}$ is the number of app-dependent features. 

The new score function is of the form:
\begin{equation}\label{eq:linear_score_fucntion2}
 f_{\vtheta}(\vx, a) = \vw\cdot\vphi(\vx,a) + \vw^a\cdot\vpsi(\vx),
\end{equation}
where $\vw\in\R^{d_{\phi}}$ and $ \vw^a \in \R^{d_{\psi}}$ for all $a\in\Ac$ are the weight vectors that replace $\vtheta$. While this function looks somewhat different from the score in \eq{eq:linear_score_fucntion}, it is straightforward to show the form in \eq{eq:linear_score_fucntion} is more general. Moreover, the context features could all have been represented as app-dependent features. However, in our setting it is more efficient and convenient to keep both set of feature representations. This allows us to manage memory of the context features in a sparse way.

\subsection{Contextual Features}

Contextual features represent the state of the device, and they are expressed as a set of functions over the feature primitives, such as day-of-week,  headphones connected, and known-location are used and describe in detail below. While the contextual feature do not depend on any specific app, they are weighed from each app separately. It means that we expect to have a high weight for when the feature \emph{day-of-week} equals \texttt{Saturday} for the app \texttt{BestParking}. 

We describe now the context feature vector $\vpsi(\vx)$ that are common to all apps. The first set of features is \emph{time-based} features and includes the following features
\begin{enumerate}
\item hour-of-day: $\{0, 1, \ldots, 23\}$
\item day-of-week: $\{Mon, Tue, \ldots, Sun\}$
\item part-of-day: $\{dawn, morning, noon, afternoon, \ldots \}$
\item weekend: $\{yes, no\}$, where the weekend days are country-specific
\end{enumerate}
The second set is \emph{location-based} features. These features are based in the notion of \emph{known location}. Known location is defined as a recurring location, that was visited within 50 meters in the last month.
\begin{enumerate}
\setcounter{enumi}{4}
\item ID of the known location
\item is the current location known: $\{yes, no\}$
\item just entered a known location: defined by a decay scoring function $10^{-t/15}$, where $t$ is the time since entering the location in minutes.
\item just left known: same function, but $t$ is the time of a transition into unknown locations.
\end{enumerate}  
The last set of context feature is \emph{hardware-based} features.
\begin{enumerate}
\setcounter{enumi}{8}
\item headphones connected: $\{yes, no\}$
\item headphones just connected: defined using the decay function above
\item headphones just disconnected: defined using the decay function above
\item Wi-Fi network connected: $\{yes, no\}$
\item Wi-Fi network SSID 
\item Wi-Fi just connected: defined using the decay function above
\item Wi-Fi just disconnected: defined using the decay function above
\item Bluetooth network connected: $\{yes, no\}$
\item Bluetooth network SSID 
\item Bluetooth just connected: defined using the decay function above
\item Bluetooth just disconnected: defined using the decay function above
\end{enumerate}  
The last feature models the dependency of the current app given the most 5 frecent apps. 
\begin{enumerate}
\setcounter{enumi}{19}
\item The 5 scores corresponds to the 5 most frecent apps according to the frecency predictor in \eq{eq:frecency}, where $T=1$ hour.
\end{enumerate}  
  
\subsection{App-dependent Features}

The second representation of features includes non-linear function functions that are computed for each app separately. We have three feature function.

Let $T_a$ be the set of set of time-samples that the app $a$ was used. Those are absolute time values. The first set feature functions scale the current time $t$ relative to the previous time-stamps of the app $a$:
$$
\vphi_{1,h}(\vx,a) = \frac{1}{|T_{a}|}\sum_{t_s \in T_{a}} 0.5 \left( 1+e^{-(t-t_s)^2/2h^2} \right)
$$
for $h$ in $\{1, 1.5, 3\}$ days. This feature function gives high score to recent apps for which $t-t_s$ is small in the order of $h$.

The next set of feature functions is similar, but it refers to the relative times within a day. That is, if an app $a$ is clicked every day around 9:30am, the score would be high if it is used around the same time again,
$$
\vphi_{2,h}(\vx,a) = \frac{1}{|T_{a}|}\sum_{t_s \in T_{a}} e^{-\Delta_{\textrm{SecOfDay}}(t,t_s)^2/2h^2}
$$
for $h$ in $\{60, 600, 1500\}$ seconds. The function $\Delta_{\textrm{SecOfDay}}(t,t_s)$ is the difference between absolute times $t$ and $t_s$, translated to seconds of day.

The third set of feature functions score the apps based on the locations it is used. Let $l$ be the current location in terms of latitude-longitude. Let $L_{a}$ be the set of the locations the app $a$ was used (latitude-longitude). Let $\Delta_{\textrm{LatLong}}(l,l_s)$ be the distance in meters between the location $l$ and the location $l_s$, then,
$$
\vphi_{3,h}(\vx,a) = \frac{1}{|L_{a}|}\sum_{l_s \in L_{a}} e^{\Delta_{\textrm{LatLong}}(l,l_s)^2/2h^2}
$$
for $h$ in $\{50, 200, 1000\}$ meters.
\section{Experimental Results}
\label{sec:experiments}


In this section we present the performance of our algorithm against other algorithms with different evaluation metrics. We start by comparing the accuracy of the proposed algorithm to other algorithms and with different evaluation metrics. Then, we verify our results with different number of users on different time periods. We continue with an experiment that check the accuracy and the convergence of the online algorithm over time. We conclude with results on the influence of the app ranking on its accuracy.

The data was collected from a set of 1000 randomly selected users which were active users of EverythingMe Launcher for 180 days. This set containing 5,181,312 app-click entries.

On each of these data-sets, we ran the three prediction algorithms - $k$MFU (using $k=4$), Frecency (using $p=0.1, T=60 days$) and AUC-PA (using $C=0.02$).

\subsection{App prediction performance}

We start by comparing the performance of the online algorithm with the $k$MFU algorithm and the Frecency algorithm. We randomly selected 1000 devices and extracted their data for time span of 180 days. Specifically we extracted the context information at times which correspond to app-click events. Then, for each such event, we extract a vector of feature primitives and feature functions and predict a set of 4 apps using $k$MFU, Frecency and AUC-PA. We compared the predictions of the algorithms with the actual app that was clicked. The performance results are given in \fig{fig:algo_performance}. We present 3 evaluation metrics: (a) precision -- for each device we count the times the predicted app is also the clicked app, and then average over all devices ; (b) per-app precision -- for each device and for each app we checked the precision, then averaged over all apps and devices; (c) AUC -- for each device we computed the AUC and averaged over devices. Note that AUC cannot be computed to $k$MFU, since it constantly predicts the same $k$ apps and only them.

\begin{figure*}[th]
\centering
\subfloat[Precision]{%
  \includegraphics[width=0.3\linewidth]{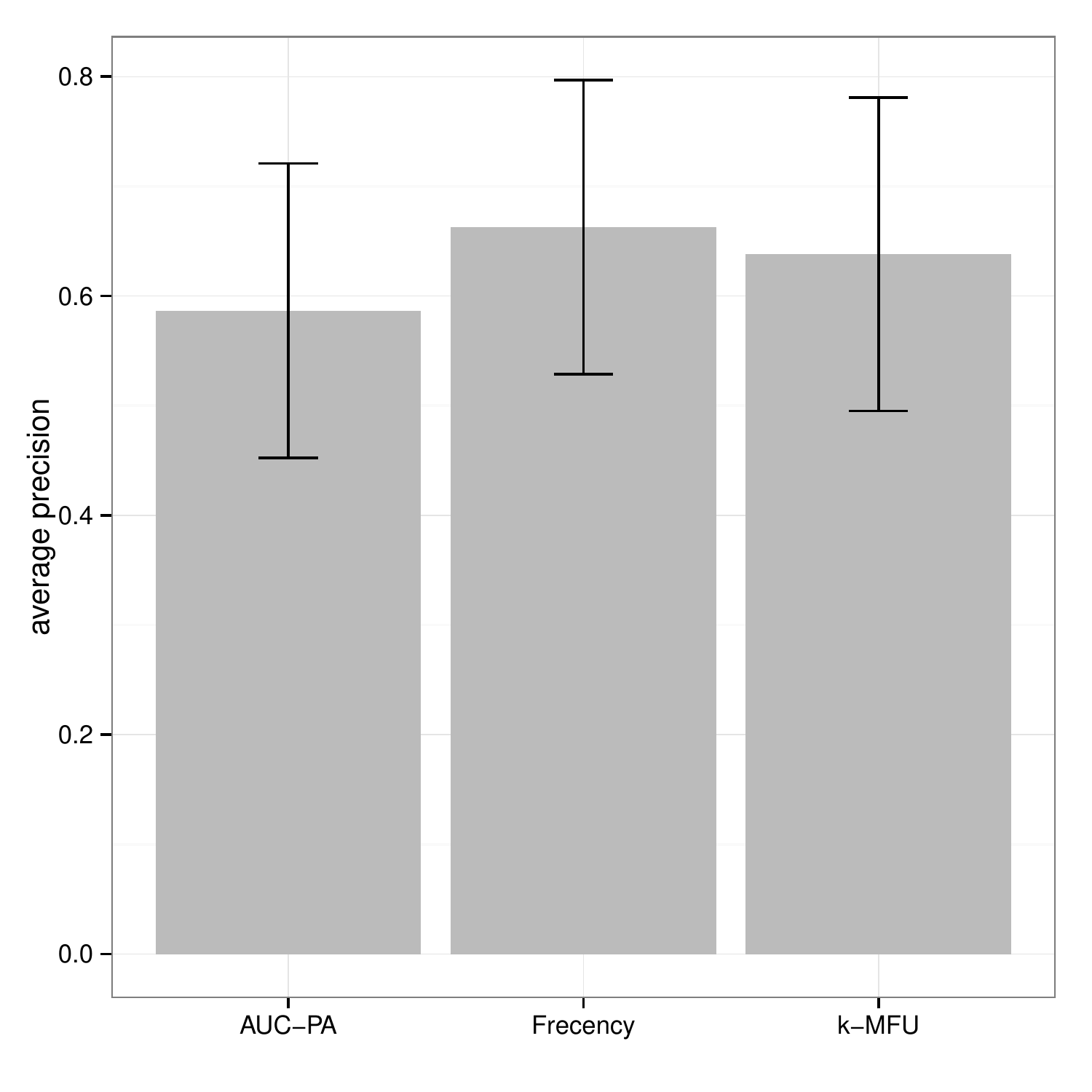}%
  \label{fig:compare-acc}%
}
\subfloat[Precision per-app]{%
  \includegraphics[width=0.3\linewidth]{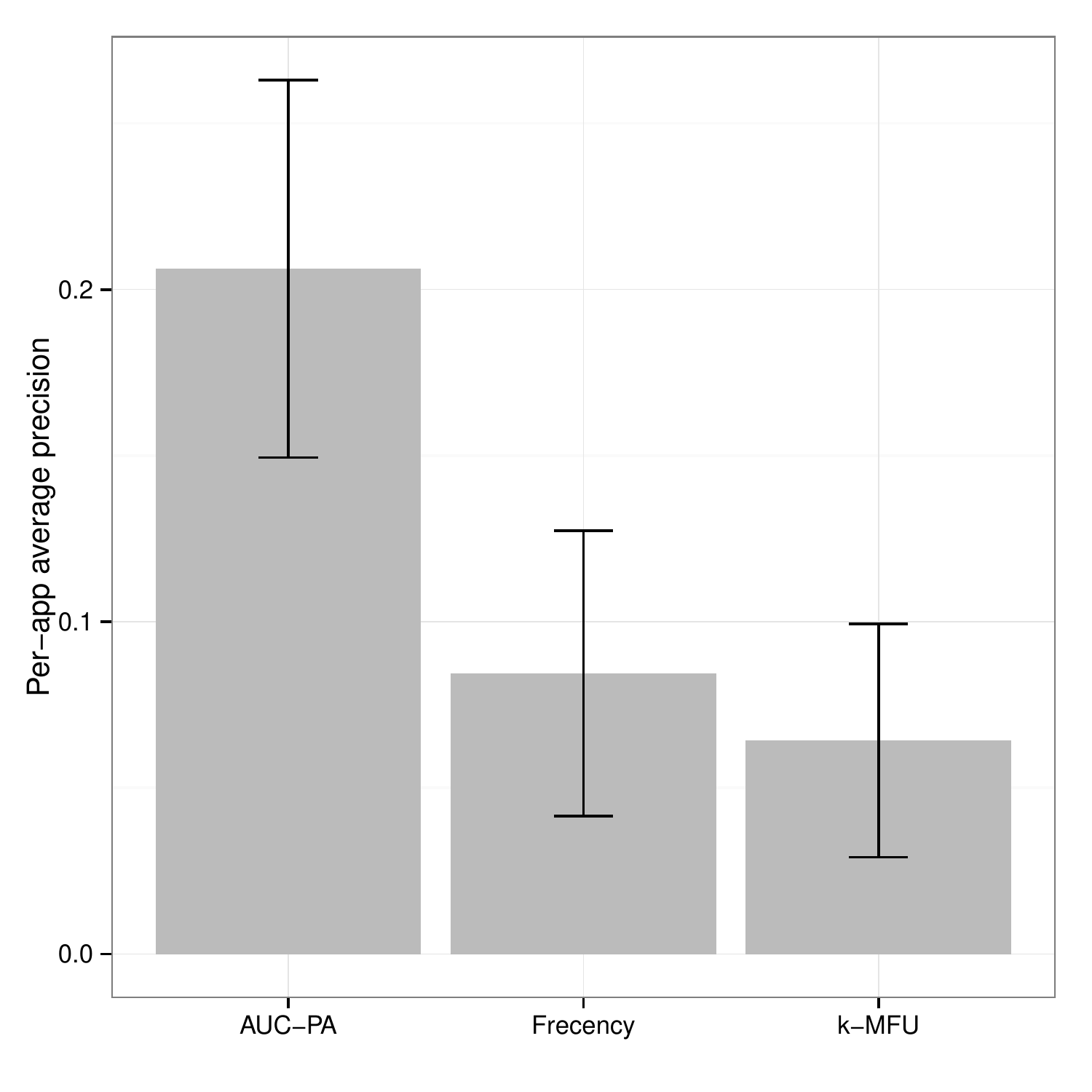}%
  \label{fig:compare-per-app-acc}%
}
\subfloat[AUC]{%
  \includegraphics[width=0.3\linewidth]{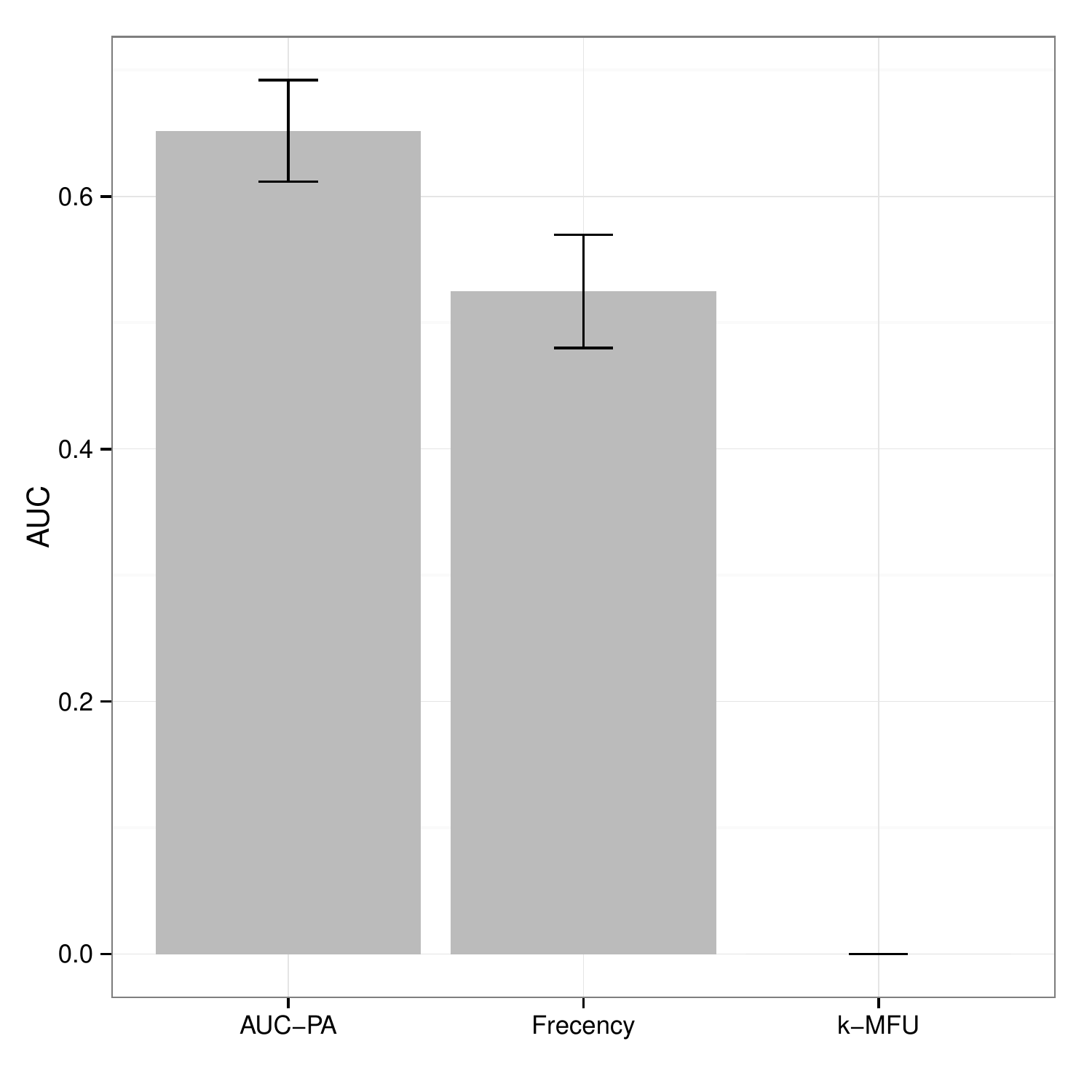}%
  \label{fig:compare-auc}%
}
\caption{Performance of different algorithms: $k$MFU with green circles, Frecency with red x-marks and AUC-PA with blue crosses. The performance are evaluated using (a) precision, (b) per-app precision, and (c) AUC (note that AUC cannot be computed to $k$MFU, since it always predict the same 4 apps).}
\label{fig:algo_performance}
\end{figure*}

Recall that the $k$MFU algorithm always predicts the same $k$ apps --- the most frequent ones.
As expected it has the best precision compared to Freceny and AUC-PA. On the other hand when comparing the per-app precision and the AUC, the AUC-PA outperforms $k$MFU and Frecency.

\subsection{Prediction Quality Over Time}

In the next experiment we analyze the quality of the prediction over the entire time period. We use the same randmoly selected group of 1000 devices, and calculate the (a) per-app precision and (b) AUC in a sliding window of one week. That is, each point in the following graph represents a time period of one week starting 3 days beforehand and ending 3 days afterwords.

We consider the performance of the algorithms over since it is installed on the device and on, that is we would like to understand the behavior over time and how it converges over time. This is very important issue, mainly in order to understand how fast prediction would be relevant for new users (the cold start problem).

\begin{figure*}[th]
\centering
\subfloat[Per-app precision]{%
  \includegraphics[height=6cm]{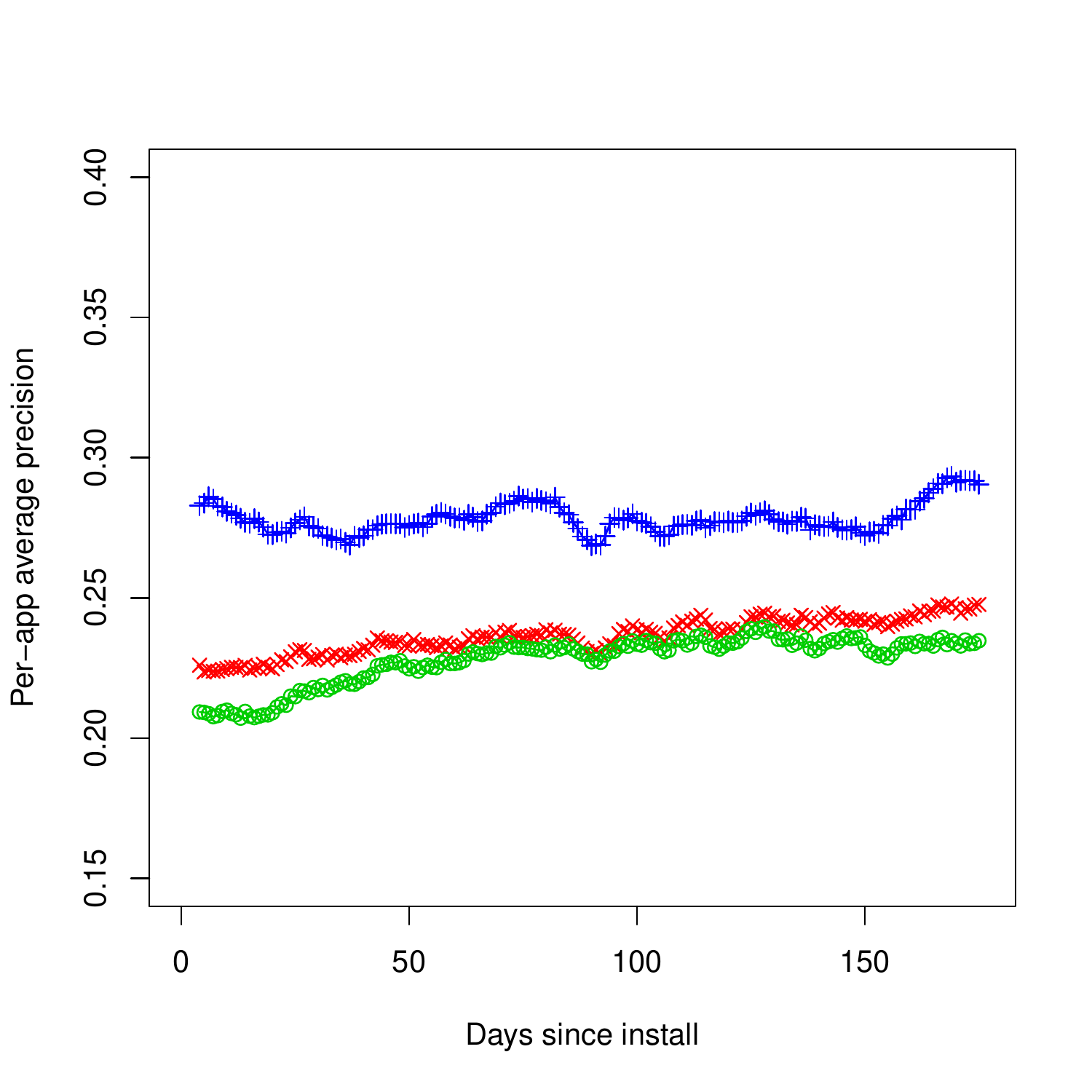}%
  \label{fig:per_age_acc1}%
}
\subfloat[AUC]{%
    \includegraphics[height=6cm]{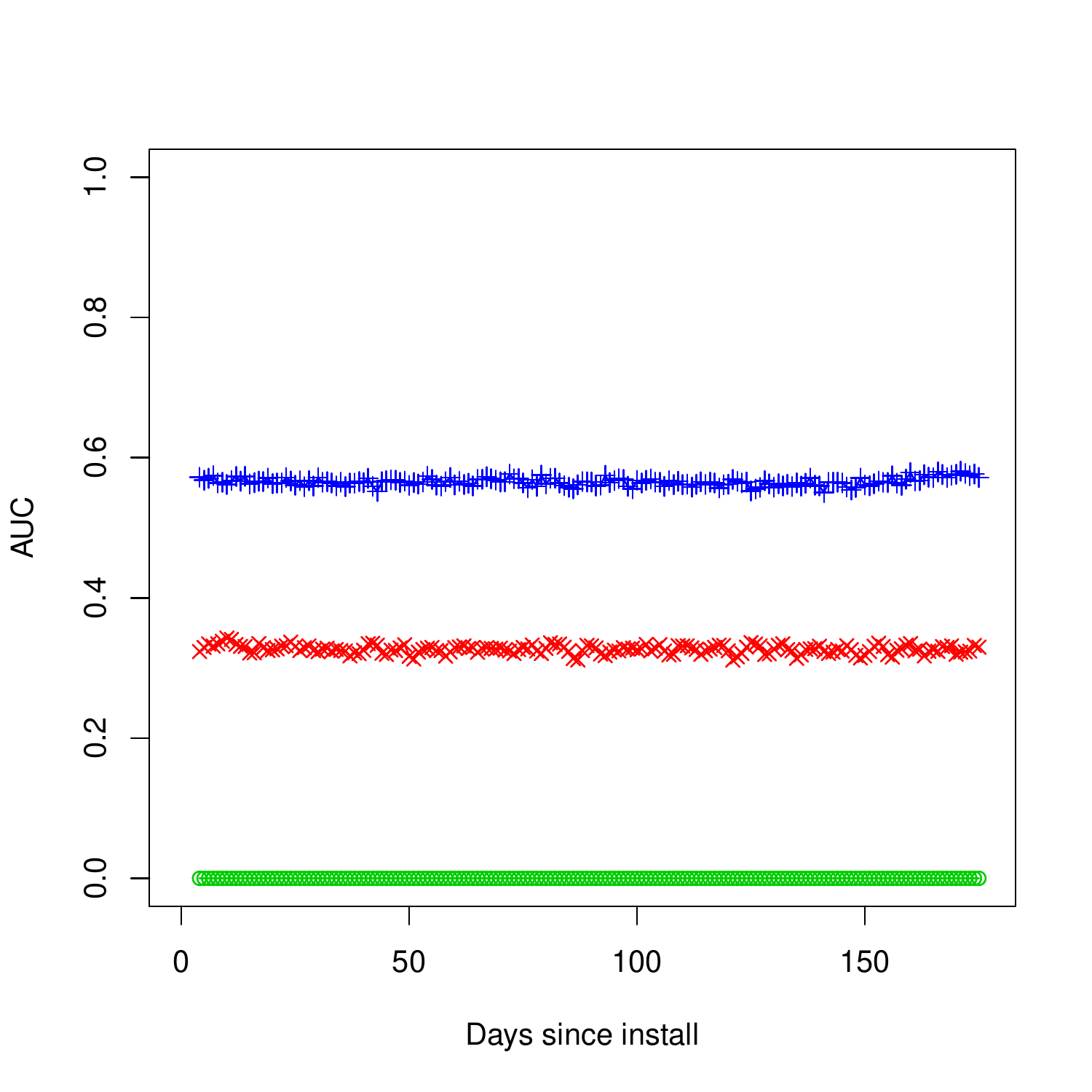}%
    \label{fig:per_age_auc2}%
}
\caption{Performance of different algorithms: $k$MFU with green circles, Frecency with red x-marks and AUC-PA with blue crosses. Measured by (a) per-app average precision and (b) AUC. Each graph shows the local performance of the 3 algorithms (with a sliding window of one week)
}
\label{fig:per_age2}
\end{figure*}

We can see that all algorithms reach stable performance in the first couple of days, and maintain the same performance throughout the entire time period. AUC-PA outperforms Frecency and $k$MFU consistently both in the AUC as well as with the per-app precision.

Something that's worth noting is the fact that as time passes, the prediction algorithms need to take into consideration more apps. According to the test data, the average amount of distinct apps used by a user accumulated over 30 days is 30.23, after 90 days is 57.92 and after 180 days, 76.14. A simple explanation for this increase is that most apps have a limited lifetime on user's devices. Some apps are installed, used for some time and then forgotten or uninstalled. Other apps are simply used very rarely.

Both $k$MFU and Frecency ignore the lesser-used apps by design, as both favour the more used apps. It is therefore interesting to see that in spite the increase in the amount of possible predictions AUC-PA keeps a stable per-app precision rate. This demonstrates the ability of the prediction algorithm to adapt to a moving hypothesis.




\subsection{Performance versus the usage-rank}

This final experiment demonstrates the ability of AUC-PA to predict from the long-tail of lesser-used apps. As previously stated, the average user will use on average 17.6 2nd-tier apps during a single month (i.e. ones which are not positioned on the App Dock or the Home Screen). We therefore assign for each app and device a usage-rank - 0 for the most used app, 1 for the second most used app and so on. Next, we average on all devices the prediction precision of all of the $n$-ranked apps.

\begin{figure}[th]
\centering
  \includegraphics[height=6cm]{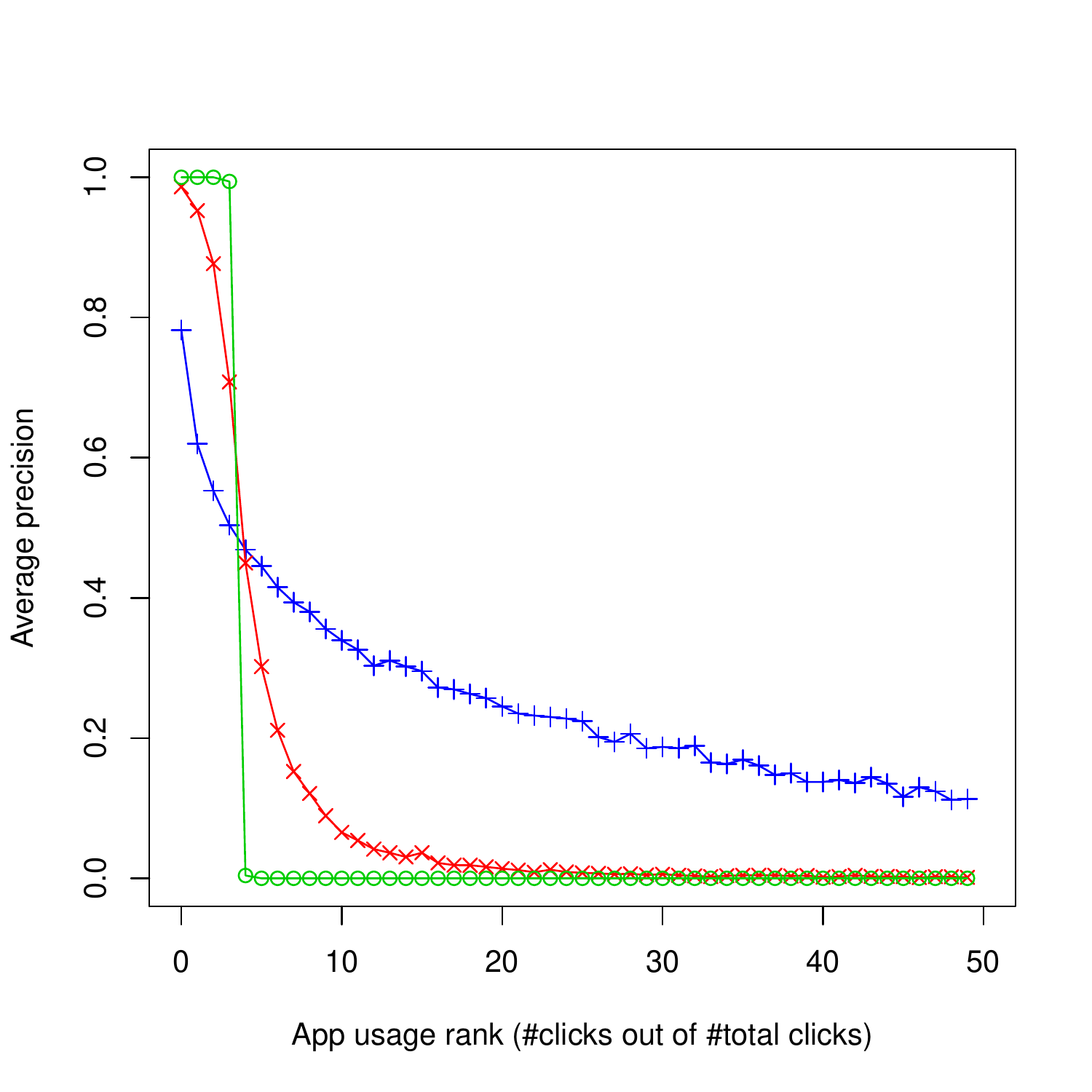}%
  \label{fig:per_age_acc3}%
\caption{Performance of different algorithms: $k$MFU with green circles, Frecency with red x-marks and AUC-PA with blue crosses. Measuring average precision vs. the usage-rank of apps.
}
\label{fig:per_age}
\end{figure}

As expected, $k$MFU performs very well on ranks 0-3 and very badly on the rest. This is a direct result of the fact that it always returns the $k$ most used apps, $k=4$ in our setting. Frecency performs slightly worse for the most used apps and slightly better for the lesser ranked apps, although by the time we reach the 10th place performace is quite poor. AUC-PA shows significantly better results for the apps with lower ranks, while taking a hit in the top-ranked apps.
\section{Discussion}
\label{sec:discussion}

In this paper we presented an algorithm for predicting the set of apps the user will probably use, based on the device's contextual information (time, location, etc.). The algorithm was designed to optimize the user's personalization and to promote the prediction of less used apps in an appropriate context by aiming at the maximization of AUC, rather than simply the raw clicks. The algorithm runs efficiently on the device in an online fashion and constantly updates its hypothesis to handle changes in user's preference over time. 

In a set of experiments we showed that the algorithm attains high performance when evaluated using AUC or when evaluated using the normalized precision (frequently and rarely used apps are weighted the same). We also showed that the algorithm converges, on the average, within few days. Last we showed that the algorithm attains a high prediction rate for those apps which are ranked below the most used apps.

Future work is focused on finding new features to improve performance in general. More specifically, we are interested in including features that indicate the user is located in a special point-of-interest, like a school, an airport, or a shopping mole. Such features, for example, will allow the algorithm to support a different set of app preferences at those location. Another possible direction which we are investigating now is the option to suggest the user to install a new app that would fits its preference in a given context. Another interesting direction that we will explore in future work is a sequential modeling of app usage.
\bibliographystyle{plain}
\bibliography{references}
\end{document}